\newtheorem{definition}{Definition}
\newtheorem{theorem}{Theorem}
\newtheorem{lemma}{Lemma}
\newtheorem{proposition}{Proposition}
\newtheorem{corollary}{Corollary}
\newcommand{\secref}[1]{Section \ref{#1}}
\newcommand{\aaa}{\mathbf{a}}
\newcommand{\ww}{\mathbf{w}} 
\newcommand{\xx}{\mathbf{x}} 
\newcommand{\ee}{\mathbf{e}} 
\newcommand{\zz}{\mathbf{z}} 
\newcommand{\DD}{\mathcal{D}} 
\newcommand{\WW}{\mathcal{W}} 
\newcommand{\YY}{\mathcal{Y}} 
\newcommand{\ZZ}{\mathcal{Z}} 
\newcommand{\HH}{\mathcal{H}}  
\newcommand{\GG}{\mathcal{G}}
\newcommand{\AAA}{\mathcal{A}} 
\newcommand{\comment}[1]{}
\newcommand{\todoamir}[1]{{\color{red}#1}}
\newcommand{\reals}{\mathbb{R}} 
\newcommand{\ball}{\mathbb{B}} 
\newcommand{\poly}{\mathrm{poly}} 
\renewcommand{\kappa}{\mathcal{K}}
\newcommand{\inner}[1]{\langle #1 \rangle}
\begin{document}

%

%

\twocolumn[

\aistatstitle{Learning Rules-First Classifiers}

\aistatsauthor{ Deborah Cohen \And Amit Daniely \And Amir Globerson \And Gal Elidan }

\aistatsaddress{ Google Research \And Google Research \\ Hebrew University \And Google Research \\ Tel-Aviv University \And Google Research \\ Hebrew University}
]

\begin{abstract}
Complex classifiers may exhibit ``embarassing'' failures in cases where humans can easily provide a justified classification. Avoiding such failures is obviously of key importance. In this work, we focus on one such setting, where a label is perfectly predictable if the input contains certain features, or rules, and otherwise it is predictable by a linear classifier. We define a hypothesis class that captures this notion and determine its sample complexity. We also give evidence that efficient algorithms cannot achieve this sample complexity. We then derive a simple and efficient algorithm and show that its sample complexity is close to optimal, among efficient algorithms. Experiments on synthetic and sentiment analysis data demonstrate the efficacy of the method, both in terms of accuracy and interpretability.
\end{abstract}


\section{Introduction}
The accuracy of machine learning algorithms has dramatically improved since the re-emergence of deep learning models. However, in many machine learning applications, the model will make ``embarassing'' mistakes. Namely, mistakes on examples that a human would classify easily, and have a clear explanation for her decision. As a motivating example, consider a medical diagnosis system that, on average, performs better than the family doctor. However, every now and then, the system makes an embarrassing mistake and fails in a scenario where a simple mechanism can provide the correct diagnosis. As another example, consider an online streaming platform where it would be ``embarrassing'' not to recommend episode $i+1$ to someone who is watching episode $i$ in a series. 

Clearly, we would like to avoid such mistakes. This is important for improving usability of learned models, and for making them more interpretable. A key challenge in addressing the above problem is defining the notion of an embarrassing mistake. From the viewpoint of standard statistical learning theory, all mistakes are identical, and one is not more embarrassing than the other. But, we can structure our hypothesis class such that ``easy'' cases are processed in an explainable way.

We take the first step toward an explicit formalization of this goal by considering easy examples to be those whose label is deterministic given certain values of a single feature (e.g., in the streaming example above, if we observed episode 3 of a series, we will want to watch episode 4). However, we clearly do not expect all samples to be classified using rules, and therefore allow the label to also result from a different classifier over the other features, when no rule applies. We call such hybrid models \emph{rules-first classifiers}. Specifically, we consider the case where for a set $\kappa$ of $k$ ``rule'' features, the label is $1$ if any feature in $\kappa$
is non-negative. Otherwise, the label is determined by a linear classifier whose norm is bounded by $B$. We call such distributions $(k,B)$-realizable.

We investigate the computational and sample complexity of learning $(k,B)$-realizable distributions, and contrast these with related hypothesis classes defined by a bound on $\ell_1$ or $\ell_2$ norms. Specifically, we prove that the sample complexity of the problem is $\tilde \Theta \left( \frac{k+B^2}{\epsilon} \right)$. Interestingly, we show that this sample complexity is substantially better compared to that of the natural convex relaxation, which is $\tilde \Omega \left(\frac{kB^2}{\epsilon}\right)$.

After settling the statistical complexity for the problem, we investigate its computational complexity. We derive an efficient greedy algorithm for the classification task, and show that it enjoys a sample complexity of $\tilde\Theta\left(\frac{kB + B^2}{\epsilon}\right)$. While this sample complexity is much better compared to the natural convex relaxation, it is still inferior to the information theoretic limit of $\tilde \Theta \left( \frac{k+B^2}{\epsilon} \right)$.

Can better sample complexity be achieved by efficient algorithms? We give evidence that the answer is negative. Indeed, we show that an efficient algorithm whose sample complexity is better than $\tilde\Theta\left(\frac{kB + B^2}{\epsilon}\right)$ would lead to efficient algorithms for problems that are hypothesized to be hard.

The topic of rule learning has been studied in the past of course \citep[e.g., see][and \secref{sec:related} for more details]{rivest1987learning,zhang2002association}. Most of these approaches consider the case where \emph{every} classification decision corresponds to activating a rule (e.g., for decision trees). Here we focus on the arguably more realistic setting whereby rules only apply to a subset of the cases, i.e. the easily explained examples, and other cases are covered by a function of all the features. To the best of our knowledge, we provide the first theoretical characterization of this rules-first setting.

\section{Preliminaries}
\label{sec:preliminaries}
We begin with notation and relevant background.
Throughout the paper, the following notations are used. The set of integers in $[1,d]$ is denoted by $[d]$ and the complement of a set $\kappa$ is $\kappa^c$. We denote column vectors by boldface letters. The  $j$th feature of $\xx$ is denoted $\xx(j)$. The vector $\zz$ restricted to the set $\kappa$ is $\zz_{| \kappa}$ and $\inner{\cdot, \cdot}$ stands for the inner product. For $k\le d$, we denote $\binom{[d]}{k} = \{A\subseteq [d] \mid |A|=k\}$ and $\binom{[d]}{\le k} = \{A\subseteq [d] \mid |A|\le k\}$. The $\ell_1$, $\ell_2$ and $\ell_{\infty}$ norms are $\|\cdot\|_1$, $\|\cdot\|_2$ and $\|\cdot\|_\infty$, respectively. We also use the $\ell_0$-pseudonorm  $\|\xx\|_0 = |\{j \mid \xx(j)\ne 0\}|$. \comment{\todoamir{Should be ``norm''. Debby:changed to pseudonorm}}
We denote $\ball_B^{d,p} = \{\xx\in\reals^d \mid \|\xx\|_p\le B\}$.
For $p=2$, let $\ball_B^{d}=\ball_B^{d,2}$.

\paragraph{\bf Regularized Linear Classification Models:}
Consider the standard supervised classification problem. Let
$S=\{(\xx_i,y_i)\}_{i=1}^m$ be a set of $m$ training samples, drawn i.i.d. from some distribution $\DD$ over $\mathcal{X} \times \mathcal{Y}$, with $\mathcal{X}=\mathbb{R}^d$ and $\mathcal{Y}=\{\pm1\}$. 
To avoid measure theoretic subtleties, we assume that the support of $\DD$ is finite (none of the results will depend on its cardinality and all will hold for $\DD$ with infinite support). The goal is to find a classifier $h:\mathcal{X}\to\mathcal{Y}$ whose error $\mathrm{Err}_\mathcal{D}(h) = \Pr_{(\xx,y)\sim\mathcal{D}}\left(h(\xx)\ne y\right)$ is as small as possible. 

We consider classes of linear classifiers. Namely, classes of the form $\HH = \{\xx \mapsto \langle \ww,\xx \rangle | \ww\in \mathcal{W}\}$, for some $\mathcal{W}\subset \reals^d$. Two typical choices of $\WW$ are the $\ell_1$ and $\ell_2$ balls, as well as combinations thereof such as the elastic-net ball \citep{zou2005regularization} where $\left\{ \ww : 
\lVert\ww\rVert_1 \leq B_1 \text{ and } \lVert\ww\rVert_2^2 \leq B_2^2\right\}$. Recently, \citet{zadorozhnyi2016huber} also proposed the Huber-norm ball $\left\{ \ww_a + \ww_b : \lVert\ww_a\rVert_1 \leq B_1 \text{ and } \lVert\ww_b\rVert_2^2 \leq B_2^2\right\}$ for $B_1, B_2>0$.
\comment{
Traditionally, regularization, such as the $\ell_2$-norm penalty, which encourages the fitted parameters to be small, aims at preventing overfitting. The $\ell_1$ penalty further favors sparse parameter vectors. The  elastic-net and Huber-norm penalties allow a combination of dense and sparse weight vectors \citep{zou2005regularization, zadorozhnyi2016huber}.
}

A popular approach for the classification problem is to minimize a surrogate loss function. Namely, given a class $\HH$ of functions from $\reals^d$ to $\reals$ and a loss function $\ell:\reals\times \YY \to[0,\infty)$, solve $\min_{h\in \HH} \ell(h,S)$, where
$\ell(h,S) = \frac{1}{m}\sum_{i=1}^m \ell(h(\xx_i),y_i)$.
The expected loss $\ell$ of $h\in\HH$ with respect to $\DD$ is
$\ell(h,\DD) = \mathbb{E}_{(\xx,y)\sim\DD} \ell(h(\xx),y)$.
The optimal true loss is $\ell(\HH,\DD) = \inf_{h\in \HH}\ell(h,\DD)$, and the optimal empirical loss is $\ell(\HH,S) = \inf_{h\in \HH}\ell(h,S)$.

Popular loss functions are the mis-classification (zero-one) loss:
\begin{equation*}
\ell_{\text{mis}}(\hat y, y) \triangleq \begin{cases}
0 & \hat y \cdot y > 0 \\
1 & \hat y \cdot y \le 0,
\end{cases}
\end{equation*}
the margin loss where the above 0 threshold is relaxed to 1, the hinge loss
$\ell_{\text{hinge}}(\hat y, y) \triangleq \max \{0, 1 - \hat y \cdot y \}$,
and the ramp loss $\ell_{\text{ramp}}(\hat y, y) \triangleq \llbracket 1 - \hat y \cdot y \rrbracket$, where $\llbracket r \rrbracket \triangleq \max(0, \min(r,1))$.

Note that the ramp loss is upper-bounded by the margin loss and lower-bounded by the mis-classification error. Therefore, whenever $\DD$ has a low large-margin loss, it also has low ramp loss. Likewise, once we find a hypothesis with small ramp loss, we also find a hypothesis with small mis-classification loss.

\paragraph{\bf Sample Complexity Definitions:}
We now define the sample complexity of an algorithm and a hypothesis class with respect to a loss function, which we use later on to evaluate and compare different algorithms.
\begin{definition} [{\bf Sample Complexity of Algorithm}]
Fix a hypothesis class $\HH$.
The sample complexity of an algorithm $\AAA$ is the function $m_\AAA:(0,1)\times (0,1)\to \mathbb N$ so that $m_\AAA(\epsilon,\delta)$ is the minimal number for which the following holds: If $m\ge m_\AAA(\epsilon,\delta)$, then w.p.\ $\ge 1-\delta$ over the choice of $S$ and the internal randomness of $\AAA$, we have that $\ell_{\text{ramp}}(\AAA(S),\DD)\le \ell_{\text{ramp}}(\HH,\DD)+\epsilon$.
\end{definition}

\begin{definition} [{\bf Sample Complexity of Hypothesis Class}]
Fix a hypothesis class $\HH$ and a loss $\ell$.
The sample complexity of $\HH$ is  $m_\HH(\epsilon,\delta) = \min_{\AAA}m_\AAA(\epsilon,\delta)$.
\end{definition}
We say that $\DD$ is realizable if $\ell_{\text{ramp}}(\HH,\DD)=0$. Likewise, $\DD$ is $\eta$-realizable if $\ell_{\text{ramp}}(\HH,\DD)\le \eta$.
The realizable sample complexity of an algorithm and a class is defined similarly to the standard sample complexity, but restricted to realizable $\DD$. We note that our definitions of sample complexity consider the ramp loss. This is motivated by the properties of the ramp loss noted above. From now on, we fix $\delta$ to be a small constant, and omit it from the complexity measures.
\section{The Rules-First Learning Problem}
\label{sec:formulation}
We are now ready to formalize our learning problem. Recall that we would like to learn rules-first classifiers. Namely, classifiers whose outcome is either determined via a small set of features, which are referred to as rules, or a bounded norm linear classifier on the remaining features. A simple such rule based case is when we have a set $\kappa$ of $k$ features such that the label is $1$ if one of these features is positive, i.e., 
\begin{equation}
\Pr_{(\xx,y)\sim\DD}\left(y =1 \mid \xx(j) > 0\text{ for some }j\in\kappa\right)=1,
\label{eq:rule_predict}
\end{equation}
and otherwise the label is determined by a bounded norm linear classifier, i.e.,
\begin{equation}
\Pr_{(\xx,y)\sim\DD}\left(y\inner{\ww, \xx} \ge 1 \mid \xx(j) \le 0\text{ for all }j\in\kappa\right)=1,
\label{eq:linear_predict}
\end{equation}
where $\ww\in \reals^d$ with $\|\ww\|_2^2\le B^2$.  

\begin{definition}
A distribution $\DD$ is $(k,B)$-realizable if there is a set $\kappa 
\in \binom{[d]}{\le k}$ and a weight vector $\ww$ for which \eqref{eq:rule_predict} and \eqref{eq:linear_predict} hold.
\end{definition}
An equivalent notion with $\ell_1$ regularization over $\ww$ may be defined, such that the results presented in the following sections transfer in the expected way.

In the above definition, a single rule can determine the label. We next consider a broader set of distributions, which we will use for deriving the sample complexity of $(k,B)$-realizable $\DD$. Begin by noting that if $\DD$ is $(k,B)$-realizable then there are vectors $\ww_a$ with $\|\ww_a\|_2^2\le B^2$ and $\ww_b$ with $\|\ww_b\|_0\le k$ such that:\footnote{To see this, note that one can take $\ww_a = \ww$ and $\ww_b$ to be the indicator vector of $\kappa$, multiplied by a large enough scalar since we assume that $\DD$ has finite support.}
\begin{equation}\label{eq:weakly_realizable}
\Pr_{(\xx,y)\sim\DD}\left(y\inner{\ww_a + \ww_b, \xx} \ge 1\right)=1.
\end{equation}
Motivated by this observation, we say that $\DD$ is $(k,B)$-weakly realizable if there exist norm bounded $\ww_a,\ww_b$ as above, such that \eqref{eq:weakly_realizable} holds. A $(k,B)$-weakly realizable distribution can be realized by the following hypothesis class (we omit the dependence on $k,B$):
{\small{\begin{equation} \label{eq:H20}
\HH_{2,0} = \{\xx\in \ball_1^{d,2} \mapsto \langle \ww_a+\ww_b,\xx \rangle \mid \lVert \ww_a\rVert_2^2 \leq B^2, \lVert \ww_b\rVert_0 \leq k \}.
\end{equation}}}
Namely, $\DD$ is realizable by $\HH_{2,0}$ if and only if it is $(k,B)$-weakly realizable. The hypothesis class $\HH_{2,0}$  induces weight vectors composed of $k$ unbounded entries (rules) and a remaining $d-k$ entries with bounded $\ell_2$ norm. This drives the prediction to be dictated by the $k$ features with highest weights, or rules, and in their absence, to be determined by a bounded linear classifier on the remaining features. Similarly to $\HH_{2,0}$, we define:
{\small{\begin{equation} \label{eq:H10}
\HH_{1,0} = \{\xx\in \ball_1^{d,\infty} \mapsto \langle \ww_a+\ww_b,\xx \rangle \mid \lVert \ww_a\rVert_1 \leq B, \lVert \ww_b\rVert_0 \leq k \}.
\end{equation}}}

\vspace{-0.5cm}
As we shall see, these \emph{rules-first learning} formulations lead to sample complexity reduction as well as practical advantages. Specifically, the contributions of this work are as follows (ignoring logarithmic factors):
\begin{itemize}
\item We show that the sample complexity of $(k,B)$-realizable distributions\comment{$(k,B)$-weakly realizable distributions} is $\frac{k+B^2}{\epsilon}$.
\item We derive an efficient and simple greedy algorithm for learning $(k,B)$-realizable distributions, with somewhat inferior sample complexity of $\frac{Bk+B^2}{\epsilon}$.
\item We give evidence that the sample complexity of our greedy algorithm is close to optimal among efficient algorithms and show that it is better than that of the natural convex relaxation of the problem.
\item We experiment with algorithms for the aforementioned scenario, comparing the greedy approach to the traditional $\ell_1$ and $\ell_2$ regularization approaches.
\end{itemize}
Taken together, our results indicate that the problem of learning rules-first classifiers exhibits an interesting statistical computational trade-off, and that efficient algorithms work well in practice.
\section{Sample Complexity}
\label{sec:sample_comp}

In this section, we derive the sample complexity of the rule-based hypothesis classes $\HH_{2,0}$ and $\HH_{1,0}$ and use the former to obtain the sample complexity of $(k,B)$-realizable distributions.

\begin{theorem}\label{thm:sample_complexity_20}
The sample complexity of $\HH_{2,0}$ is $\tilde{O} \left( \frac{k \log d + B^2}{\epsilon^2}\right)$.
\end{theorem}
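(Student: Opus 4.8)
The plan is to prove the bound by establishing uniform convergence of the empirical ramp loss over $\HH_{2,0}$ and then invoking empirical risk minimization, so that the sample complexity of the class (a minimum over algorithms) is controlled by the Rademacher complexity of the induced loss class $\mathcal{L}=\{(\xx,y)\mapsto \ell_{\text{ramp}}(\langle\ww_a+\ww_b,\xx\rangle,y)\}$. Since $\ell_{\text{ramp}}$ takes values in $[0,1]$, the standard symmetrization bound gives a generalization gap of order $\mathcal{R}_m(\mathcal{L})+\sqrt{\log(1/\delta)/m}$, and it remains to show $\mathcal{R}_m(\mathcal{L})=\tilde O\big((\sqrt{k\log d}+B)/\sqrt m\big)$, which upon setting the gap to $\epsilon$ yields the claimed $m=\tilde O\big((k\log d+B^2)/\epsilon^2\big)$.

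First I would peel off the combinatorial choice of the sparse support. Writing $\mathcal{L}=\bigcup_{A\in\binom{[d]}{\le k}}\mathcal{L}^A$, where $\mathcal{L}^A$ fixes $\mathrm{supp}(\ww_b)\subseteq A$, a Massart-type bound for the Rademacher complexity of a union of $N$ classes of $[0,1]$-valued functions gives $\mathcal{R}_m(\mathcal{L})\le \max_A\mathcal{R}_m(\mathcal{L}^A)+\sqrt{2\log N/m}$. Since $N=\big|\binom{[d]}{\le k}\big|\le (ed/k)^k$, so $\log N=O(k\log d)$, this step already accounts for the $k\log d$ term; the remaining work is to bound $\mathcal{R}_m(\mathcal{L}^A)$ for a single fixed support.

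For fixed $A$ the predictor is $\langle\ww_a,\xx\rangle+\langle\ww_b,\xx\rangle$, the sum of a $B$-bounded $\ell_2$ linear function on $\xx\in\ball_1^{d,2}$ and an arbitrary linear function on the $k$ coordinates in $A$. The delicate point, and the main obstacle, is that the sparse part is norm-unbounded, so its Rademacher complexity is infinite and the usual ``contraction then subadditivity'' route fails: the contraction lemma only exploits Lipschitzness of $\ell_{\text{ramp}}$ and would hand back the complexity of the unbounded inner class. The fix is to route through metric entropy while exploiting that $\ell_{\text{ramp}}(\hat y,y)$ is constant for $\hat y$ outside $[-1,1]$, so the loss depends only on the prediction clipped to $[-1,1]$. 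I would therefore bound the fat-shattering dimension of the fixed-support predictor class at scale $\gamma$ by $k+O(B^2/\gamma^2)$ (the summand $k$ from the $k$-dimensional unconstrained sparse part, and $O(B^2/\gamma^2)$ from the $B$-bounded $\ell_2$ part over the unit ball), which transfers to $\mathcal{L}^A$ since composing with the $1$-Lipschitz bounded map $\ell_{\text{ramp}}\circ\mathrm{clip}$ does not increase fat-shattering.

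Finally I would convert this into a Rademacher bound via Dudley's entropy integral, using that a $[0,1]$-valued class with fat-shattering dimension $\mathrm{fat}_\gamma$ has $\log N(\mathcal{L}^A,\gamma,L_2(S))\lesssim (k+B^2/\gamma^2)\,\mathrm{polylog}(m/\gamma)$. The crucial feature is that the two contributions separate additively inside the integral: $\int\sqrt{k}\,d\gamma=\tilde O(\sqrt k)$ gives the $\sqrt{k/m}$ rate, while $\int (B/\gamma)\,d\gamma$ with a lower cutoff of order $1/m$ gives the $\tilde O(B/\sqrt m)$ rate, so that $\mathcal{R}_m(\mathcal{L}^A)=\tilde O\big((\sqrt k+B)/\sqrt m\big)$ with no multiplicative $k\cdot B^2$ cross term. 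Combined with the $\sqrt{k\log d/m}$ from the union bound, this closes the argument. The hardest part is precisely ensuring this additive, rather than multiplicative, combination of the sparse ($k$) and norm ($B^2$) complexities; the clipping observation, together with the logarithmic-in-$\gamma$ entropy of the sparse part (versus the $1/\gamma^2$ entropy of the $\ell_2$ part), is what prevents a generic ``$\mathrm{fat}/\epsilon^2$'' bound from degrading the $B^2$ term into $B^2/\epsilon^4$.
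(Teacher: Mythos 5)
Your high-level decomposition is the same as the paper's: a union over the $\binom{[d]}{\le k}$ possible supports of $\ww_b$, which accounts for the $k\log d$ term, plus a capacity bound of order $k+B^2$ for the class with a fixed support. (The paper performs the union at the confidence level, instantiating a per-support generalization bound with confidence $\delta/d^k$; your Massart-type union at the level of Rademacher complexities is an equivalent device for $[0,1]$-valued losses.) You also correctly isolate the real obstacle: the sparse part is norm-unbounded, so contraction alone is useless, and one must exploit that the ramp loss sees only the prediction clipped to $[-1,1]$. The difference is that the paper does not prove the fixed-support bound at all --- it invokes Proposition \ref{prop:sivan} of \citet{sabato2013distribution} for the class $\HH_{\kappa,B}$, together with a rescaling argument identifying its sample complexity with that of $\GG_{\kappa,B}$ --- whereas you propose to derive it from scratch.

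That is where your proposal has a genuine gap. Your central claim --- that the clipped fixed-support class has $\gamma$-fat-shattering dimension $k+O(B^2/\gamma^2)$ --- is true, but the justification you give (``the summand $k$ from the $k$-dimensional unconstrained sparse part, and $O(B^2/\gamma^2)$ from the $B$-bounded $\ell_2$ part'') tacitly assumes that fat-shattering dimension is subadditive under Minkowski sums of function classes. No such general principle exists, and the usual rescue (log covering numbers are subadditive under sums and equivalent to fat-shattering up to polylogs) is unavailable here for exactly the reason you yourself flagged: the sparse summand is unbounded, so its covering numbers on a sample are infinite, and the clipping is applied to the sum, not to each summand separately. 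This missing step is precisely the technical heart of the theorem; it is what the paper outsources to \citet{sabato2013distribution}. One way to close it within your framework: first cover the $B$-bounded $\ell_2$ part at scale $\gamma/2$, with log covering number $O\bigl((B^2/\gamma^2)\log m\bigr)$; since clipping is $1$-Lipschitz, it then suffices to cover, for each element $f'$ of that cover, the class $\{\xx\mapsto \mathrm{clip}(f'(\xx)+\inner{\ww_b,\xx})\ \mid\ \mathrm{supp}(\ww_b)\subseteq A\}$; for fixed $f'$ this is a $[-1,1]$-valued class obtained by composing the $k$-dimensional linear class with per-point monotone maps, so its pseudo-dimension is at most $k$ and Haussler's bound gives log covering number $O(k\log(1/\gamma))$. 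Summing these two entropies and feeding them into your Dudley integral then yields the additive $\tilde O\bigl((\sqrt{k}+B)/\sqrt{m}\bigr)$ Rademacher bound you want; alternatively, simply cite the corresponding lemma of \citet{sabato2013distribution}, as the paper does.
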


\begin{theorem}\label{thm:sample_complexity_10}
The sample complexity of $\HH_{1,0}$ is $\tilde{O} \left( \frac{k \log d + B^2 \log d }{\epsilon^2}\right)$.
\end{theorem}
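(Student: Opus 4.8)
The plan is to follow the same route as Theorem~\ref{thm:sample_complexity_20}, reducing the bound to a Rademacher/covering-number estimate for the ramp-loss class and then isolating the single place where the $\ell_1$/$\ell_\infty$ geometry of $\HH_{1,0}$ (rather than the $\ell_2$/$\ell_2$ geometry of $\HH_{2,0}$) contributes the extra $\log d$. Since $\ell_{\text{ramp}}$ is bounded in $[0,1]$ and $1$-Lipschitz, standard uniform-convergence bounds reduce the sample complexity of an ERM to controlling the empirical Rademacher complexity $\RR_m$ of the induced loss class; the target $\tilde O\!\left((k\log d + B^2\log d)/\epsilon^2\right)$ will follow once I show $\RR_m = \tilde O\!\left(\sqrt{(k + B^2\log d)/m}\right)$ per fixed support and then pay a union-bound cost over supports.

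First I would decompose $\HH_{1,0}$ by the support of the sparse part, $\HH_{1,0} = \bigcup_{A \in \binom{[d]}{\le k}} \HH_A$, where $\HH_A$ fixes $\mathrm{supp}(\ww_b)\subseteq A$. This is the crucial step separating the two sources of complexity. Within a fixed $A$ I would control the loss class through $L_2(S)$ covering numbers, treating the decomposition $h=\inner{\ww_a,\xx}+\inner{\ww_b,\xx}$. For the $\ell_1$-bounded part over the $\ell_\infty$ ball, Massart's lemma gives $\RR_m\bigl(\{\inner{\ww_a,\cdot}:\|\ww_a\|_1\le B\}\bigr)\le B\sqrt{2\log(2d)/m}$, equivalently $\log N(\gamma,\cdot,L_2(S))=O(B^2\log d/\gamma^2)$; this is exactly where the $\log d$ enters, in contrast to the dimension-free $B/\sqrt m$ of the $\ell_2$ case.

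The main obstacle is the sparse part $\inner{\ww_b,\cdot}$: because $\ww_b$ is unbounded its Rademacher complexity and $L_2$ covering numbers are infinite, so Lipschitz contraction on the whole real-valued class fails. I would resolve this by covering the \emph{composed} loss directly, exploiting that for each fixed label $\ell_{\text{ramp}}$ is a bounded, monotone, three-piece piecewise-linear function of its first argument. Concretely, for fixed $A$ the class $\{(\xx,y)\mapsto \ell_{\text{ramp}}(\inner{\ww_b,\xx},y):\mathrm{supp}(\ww_b)\subseteq A\}$ is a $[0,1]$-bounded class obtained from a $k$-dimensional space of linear functions by a fixed monotone transform, hence has pseudo-dimension $O(k)$ and $\log N(\gamma,\cdot,L_2(S))=O(k\log(1/\gamma))$ — finite despite the unboundedness, precisely because the loss clips large values. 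Combining the covers (cover the $\ell_1$ part, and for each cover element cover the composed sparse loss, noting that shifting the linear argument by a fixed function of $\xx$ does not raise the pseudo-dimension) yields $\log N(\gamma,\ell_{\text{ramp}}\circ\HH_A,L_2(S))=O(B^2\log d/\gamma^2 + k\log(1/\gamma))$, and a Dudley entropy-integral bound gives $\RR_m(\ell_{\text{ramp}}\circ\HH_A)=\tilde O\!\left(\sqrt{(B^2\log d + k)/m}\right)$, i.e.\ sample complexity $\tilde O\!\left((B^2\log d + k)/\epsilon^2\right)$ for each fixed support.

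Finally I would take a union bound over the $\bigl|\binom{[d]}{\le k}\bigr|\le d^{k}$ supports, replacing the confidence $\delta$ by $\delta/d^{k}$; since confidence enters only through a $\log(1/\delta)$ term, this adds $O(k\log d/\epsilon^2)$ to the sample complexity. Summing the per-support bound with this union-bound cost absorbs the bare $k$ into $k\log d$ and yields the claimed $\tilde O\!\left((k\log d + B^2\log d)/\epsilon^2\right)$. The only substantive difference from Theorem~\ref{thm:sample_complexity_20} is the $\ell_1$-over-$\ell_\infty$ estimate in the second step, $B\sqrt{\log d/m}$ instead of $B/\sqrt m$, which accounts for the extra $\log d$ multiplying $B^2$.
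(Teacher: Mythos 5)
Your proof is correct, but it takes a genuinely different route from the paper's in the per-support analysis. The paper proves Theorem~\ref{thm:sample_complexity_20} by reducing to the distribution-dependent bound of \citet{sabato2013distribution} (Proposition~\ref{prop:sivan}): it rescales the rule coordinates to pass between $\HH_{\kappa,B}$ and $\GG_{\kappa,B}$, observes $\HH_{2,0}\subset \cup_{\kappa\in \binom{[d]}{k}} \left(\GG_{\kappa,B}\right)|_{\ball^d}$, and union bounds over $\kappa$; Theorem~\ref{thm:sample_complexity_10} is then obtained (in the supplementary material) by adapting that argument to the $\ell_1/\ell_\infty$ geometry. You share the skeleton---decompose by the support of $\ww_b$, prove a per-support uniform convergence bound, pay $k\log d$ in the union bound over $\binom{[d]}{\le k}$---but you construct the per-support bound from first principles instead of invoking the black-box proposition: a Zhang-type $L_2(S)$ covering bound $O(B^2\log d/\gamma^2)$ for the $\ell_1$ part, a pseudo-dimension $O(k)$ bound for the ramp-loss-composed (hence clipped, hence bounded) sparse part, a two-stage Lipschitz cover to combine them, and Dudley's entropy integral. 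This is the right way around the genuine obstacle you correctly identify---the unboundedness of $\ww_b$ kills any direct Rademacher/contraction argument---and it is more self-contained and more transparent about where each $\log d$ enters than the machinery of \citet{sabato2013distribution}, at the price of a longer argument. Two small points to tighten: the covering bound for the $\ell_1$ class is not literally ``equivalent'' to Massart's lemma (a Rademacher upper bound does not in general yield a covering bound; cite the covering result directly, or pass through Gaussian complexity and Sudakov minoration, losing only log factors), and the claim that composing with a monotone map preserves pseudo-dimension needs the per-label case split, since $\ell_{\text{ramp}}(\cdot,y)$ is non-increasing for $y=1$ and non-decreasing for $y=-1$; both repairs are routine.
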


To prove Theorem \ref{thm:sample_complexity_20}, we rely on the following result from \citet{sabato2013distribution}, which considers the problem of distribution-dependent sample complexity. In their setting, the distribution of the input features has few directions in which the variance is high, but the combined variance in all other directions is small. With this assumption, they show that the sample complexity is characterized by the sum of the number of high-variance dimensions $k$ and the squared norm in the other directions $B^2$. 

Formally, for $\kappa\in \binom{[d]}{k}$, let
\[
\ZZ_{\kappa,B} = \{ \zz\in \reals^d \mid \|\zz_{|\kappa^c}\|_2^2\le B^2\}.
\]
Consider the class
\begin{equation}\label{eq:sivan_class}
\HH_{\kappa,B} = \{ \xx \in \ZZ_{\kappa,B} \mapsto \inner{\ww,\xx} 
\mid \ww\in \ZZ_{\kappa,1}\}.
\end{equation}
Then, \citet{sabato2013distribution} show the following result.
\begin{proposition} \label{prop:sivan}
For any $\delta \in (0,1)$, with probability $1-\delta$, every $h \in \HH_{\kappa,B}$ satisfies
\begin{multline} \label{eq:gen_bound_mis_margin}
\ell_{\text{ramp}}(h,\DD) \leq \\
\ell_{\text{ramp}}(h,S) + \sqrt{\frac{O\left(k + B^2\right) \ln(m) }{m}} + \sqrt{\frac{8 \ln (2/\delta)}{m}}.
\end{multline}
\end{proposition}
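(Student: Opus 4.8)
The plan is to establish Proposition~\ref{prop:sivan} as a uniform-convergence bound driven by a covering-number (equivalently, Rademacher-complexity) estimate for the ramp-loss composition $\ell_{\text{ramp}}\circ\HH_{\kappa,B}$. I would begin with the standard route: applying a bounded-differences (McDiarmid) argument to the random variable $\sup_{h\in\HH_{\kappa,B}}\bigl(\ell_{\text{ramp}}(h,\DD)-\ell_{\text{ramp}}(h,S)\bigr)$, using that the ramp loss lies in $[0,1]$ so that each sample changes the supremum by at most $1/m$; this concentrates the supremum around its mean up to $\sqrt{8\ln(2/\delta)/m}$, accounting for the last term. It then remains to bound the expectation of this supremum by $\sqrt{O(k+B^2)\ln(m)/m}$, and this is where the structure of $\HH_{\kappa,B}$ must be exploited.

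The central idea is to split the inner product $\inner{\ww,\xx}=\inner{\ww_{|\kappa},\xx_{|\kappa}}+\inner{\ww_{|\kappa^c},\xx_{|\kappa^c}}$ and to control each piece with a \emph{different} resource. On $\kappa^c$ both factors are $\ell_2$-bounded (by $1$ and by $B$), so the restricted class $\GG_{\kappa^c}=\{\xx\mapsto\inner{\ww_{|\kappa^c},\xx_{|\kappa^c}}\}$ takes values in $[-B,B]$ and is a norm-bounded linear class; its empirical Rademacher complexity is $O(B/\sqrt m)$, and equivalently its $\gamma$-covering number has $\log N_{\kappa^c}(\gamma)=O\bigl((B^2/\gamma^2)\log m\bigr)$. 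On $\kappa$ no norm bound is available, so instead I would use that $\GG_{\kappa}=\{\xx\mapsto\inner{\ww_{|\kappa},\xx_{|\kappa}}\}$ is a $k$-dimensional space of linear functionals, hence of pseudo-dimension $O(k)$, giving $\log N_{\kappa}(\gamma)=O\bigl(k\log(m/\gamma)\bigr)$ on the sample. Because $\ell_{\text{ramp}}=\llbracket 1-\cdot\rrbracket$ is $1$-Lipschitz and bounded, composing with it preserves the pseudo-dimension of the $\kappa$-part and only contracts distances for the $\kappa^c$-part.

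To merge the two estimates I would cover the composed loss class directly: a $\gamma$-cover is built from $(\gamma/2)$-covers of the two contributions, so by $1$-Lipschitzness $\log N(\gamma)\lesssim \log N_{\kappa}(\gamma/2)+\log N_{\kappa^c}(\gamma/2)$, and evaluating at the ramp scale $\gamma=\Theta(1)$ these \emph{add} to $O\bigl((k+B^2)\log m\bigr)$. Feeding this single-scale cover into Massart's finite-class lemma (or a Dudley integral truncated at the ramp scale, with the resulting $O(\gamma)$ approximation error absorbed into constants) bounds the expected supremum by $\sqrt{O(k+B^2)\ln(m)/m}$, which is exactly the middle term. Assembling the McDiarmid and complexity estimates yields the stated inequality.

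The main obstacle I anticipate is that $\GG_\kappa$ is genuinely unbounded, so it has no finite sup-norm cover and one cannot naively form a product of covers of the two real-valued summands; verifying that the complexities nonetheless add to $k+B^2$ rather than multiply is the step I would treat most carefully. Two features rescue the argument. First, on the sample $S$ the vectors $\{\xx_{i|\kappa}\}_{i=1}^m$ span a subspace of dimension at most $k$, so $\GG_\kappa$ acts on $S$ as a $k$-dimensional linear class and its empirical complexity is governed by its pseudo-dimension $O(k)$, not by any weight norm. Second, the clipping in $\ell_{\text{ramp}}$ means the loss depends on $\inner{\ww,\xx_i}$ only through the bounded window where $y_i\inner{\ww,\xx_i}\in[0,1]$; outside it the loss is pinned at $0$ or $1$, so arbitrarily large weights cannot generate arbitrarily fine distinctions on the sample. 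Together these make the composed loss class coverable at the ramp scale with log-covering number $O((k+B^2)\log m)$, which is precisely what drives the $k+B^2$ characterization.
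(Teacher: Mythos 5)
First, a point of reference: the paper does not prove Proposition~\ref{prop:sivan} at all --- it is imported verbatim from \citet{sabato2013distribution}, whose own proof goes through a bound of order $k + B^2/\gamma^2$ on the scale-sensitive (fat-shattering) dimension of the combined class, fed into standard uniform-convergence results for margin-type losses. So your attempt has to stand on its own, and much of it does: the McDiarmid step, the split of $\inner{\ww,\xx}$ into the $\kappa$ and $\kappa^c$ parts, the observation that the ramp loss factors through clipping the unbounded $\kappa$-part at $\pm(B+1)$ (so that the clipped class is bounded with pseudo-dimension at most $k$), and the additivity of log-covering numbers for the sum class are all sound, and they capture the real content of why the complexity is $k+B^2$ rather than $kB^2$.

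The genuine gap is in your final assembly step. A single-scale cover at the ``ramp scale'' $\gamma = \Theta(1)$ fed into Massart's lemma gives an expected supremum bounded by $\gamma + \sqrt{2\log N(\gamma)/m}$, and the additive $\gamma = \Theta(1)$ cover-approximation error is a \emph{constant} in the final generalization bound; it cannot be ``absorbed into constants,'' since the whole point is that the bound must vanish as $m \to \infty$. If you instead shrink $\gamma$, the norm-based entropy term $O\left((B^2/\gamma^2)\log m\right)$ blows up; optimizing the resulting trade-off $\gamma + (B/\gamma)\sqrt{\log m/m}$ gives an expected supremum of order $\sqrt{B}\,(\log m/m)^{1/4}$, i.e.\ sample complexity $B^2/\epsilon^4$ for the norm part rather than the claimed $B^2/\epsilon^2$. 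This loss is intrinsic to single-scale covering whenever the entropy scales like $1/\gamma^2$: you must chain over scales. A Dudley integral truncated at lower limit $\alpha \sim 1/m$ does work with your covering bounds (note WLOG $B^2 \le m$, else the statement is vacuous, so $\log(Bm)=O(\log m)$), yielding roughly $\sqrt{\left(k\log m + B^2\log^3 m\right)/m}$ --- the right polynomial rate, though with more logarithmic factors than the single $\ln m$ stated. Also beware the tempting shortcut of applying Talagrand contraction to the sum of the clipped $\kappa$-class and the $\kappa^c$-class: the clipped class has range $\Theta(B)$, so its Rademacher complexity is of order $B\sqrt{k/m}$, which produces the product $kB^2$ rather than the sum $k+B^2$; this is precisely why the cited work's scale-sensitive-dimension argument (or genuine multi-scale chaining) is needed.
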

The above result focuses on the class $\HH_{\kappa,B}$ which makes an assumption on the input features $\xx$. In our setting, we make a similar distributional assumption but on the conditional distribution of the \emph{target variable} given the special set of features, i.e. the rules. Specifically, we wish to derive sample complexity bounds for the rule-based hypothesis classes \eqref{eq:H20} as well as \eqref{eq:H10}.

To do so, we associate each example $\xx \in \ZZ_{B,\kappa}$ with the example $\xx' \in \ZZ_{1,\kappa}$ obtained by dividing each coordinate $i\in \kappa$ by $B$. We then have that the sample complexity of $\HH_{\kappa,B}$ is the same as that of the class
\begin{equation}\label{eq:our_class}
\vspace{-0.25cm}
\GG_{\kappa,B} = \{ \xx \in \ZZ_{\kappa,1} \mapsto \inner{\ww,\xx} 
\mid \ww\in \ZZ_{\kappa,B}\}.
\end{equation}

Now, since $\HH_{2,0}\subset \cup_{\kappa\in \binom{[d]}{k}} \left(  \GG_{\kappa,B}\right)|_{\ball^d}$, Proposition \ref{prop:sivan} and a union bound imply Theorem \ref{thm:sample_complexity_20}. A detailed proof as well as an adaptation for Theorem \ref{thm:sample_complexity_10} are provided in the supplementary materials.

We note that both theorems are tight, up to logarithmic factors. Indeed, both $\HH_{2,0}$ and $\HH_{1,0}$ realize the class of $k$-disjunctions, which has sample complexity $\Omega\left( \frac{k}{\epsilon^2}\right)$. Likewise, $\HH_{2,0}$ (respectively $\HH_{1,0}$) contains the class of linear classifiers with $\ell_2$ norm (respectively $\ell_1$ norm) smaller than $B$,  which has sample complexity $\Omega\left( \frac{B^2}{\epsilon^2}\right)$ \citep{anthony2009neural}. Hence, both rule-based classes have sample complexity of $\Omega\left( \frac{B^2+k}{\epsilon^2}\right)$.

We also note that boosting \citep{freund1997decision}  implies that the realizable sample complexities of $\HH_{2,0}$ and $\HH_{1,0}$ are $\tilde{O} \left( \frac{k \log d + B^2}{\epsilon}\right)$ and $\tilde{O} \left( \frac{k \log d + B^2\log d}{\epsilon}\right)$, respectively. Indeed, once we fix $\epsilon$, the general sample complexity result yields a weak learner with sample complexity of $\tilde{O}\left(k + B^2\right)$. Applying boosting on top of it yields a strong learner with the above mentioned sample complexity guarantees in the realizable case.

As a corollary to Theorem \ref{thm:sample_complexity_20} we obtain the sample complexity of learning $(k,B)$-realizable distributions. This follows from the equivalence of weak $(k,B)$-realizability to learning in $\HH_{2,0}$, and the fact that $(k,B)$-realizability implies weak $(k,B)$-realizability.
\begin{corollary}\label{thm:sample_complexity_kb}
The sample complexity of $(k,B)$-realizable distributions is $\tilde{O} \left( \frac{k \log d + B^2}{\epsilon}\right)$.
\end{corollary}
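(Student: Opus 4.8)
The plan is to derive the corollary directly from Theorem~\ref{thm:sample_complexity_20} using two ingredients: the fact that every $(k,B)$-realizable distribution is realizable by $\HH_{2,0}$, and the upgrade from the $1/\epsilon^2$ rate of Theorem~\ref{thm:sample_complexity_20} to the $1/\epsilon$ realizable rate via boosting. First I would set up the reduction. If $\DD$ is $(k,B)$-realizable with rule set $\kappa$ and weight vector $\ww$, then following the footnote argument I take $\ww_a=\ww$ (so $\|\ww_a\|_2^2\le B^2$) and $\ww_b$ equal to the indicator of $\kappa$ scaled by a large enough constant (so $\|\ww_b\|_0\le k$); finiteness of the support then guarantees $y\inner{\ww_a+\ww_b,\xx}\ge 1$ almost surely, i.e.\ \eqref{eq:weakly_realizable} holds. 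Thus $\DD$ is $(k,B)$-weakly realizable, which by the stated equivalence is exactly $\ell_{\text{ramp}}(\HH_{2,0},\DD)=0$. Hence the family of $(k,B)$-realizable distributions is contained in the family realizable by $\HH_{2,0}$, so its sample complexity is upper bounded by the \emph{realizable} sample complexity of $\HH_{2,0}$.

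Next I would bound the realizable sample complexity of $\HH_{2,0}$. Instantiating Theorem~\ref{thm:sample_complexity_20} at a fixed constant target accuracy (say $1/4$) gives a weak learner that, from $m_0=\tilde{O}(k\log d+B^2)$ samples, returns $h$ with $\ell_{\text{ramp}}(h,\DD')\le 1/4$ and hence zero-one error at most $1/4$ on any distribution $\DD'$. The point that makes boosting applicable is that realizability is preserved under reweighting: the single perfect hypothesis $h^\star\in\HH_{2,0}$ (satisfying $y\inner{\cdot,\xx}\ge 1$ a.s.) retains zero error on every reweighted distribution generated during boosting, so the weak-learning guarantee holds at each round. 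Running a standard boosting procedure (e.g.\ AdaBoost) for $O(\log(1/\epsilon))$ rounds amplifies this into an aggregate predictor with zero empirical error and effective complexity $\tilde{O}(k\log d+B^2)$; invoking the realizable (fast) generalization rate for such a predictor yields error $\tilde{O}\!\left((k\log d+B^2)/m\right)$, so $m=\tilde{O}\!\left((k\log d+B^2)/\epsilon\right)$ suffices. Combining this with the containment from the previous paragraph gives the claimed bound.

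The main obstacle is precisely the boosting step, i.e.\ converting the uniform-convergence bound of Theorem~\ref{thm:sample_complexity_20} (which by itself only gives the slow $1/\epsilon^2$ rate) into the fast $1/\epsilon$ realizable rate. Two points need care. First, the weak-learning guarantee is phrased for the ramp loss, so I must use the relations among the ramp, margin, and zero-one losses noted in the preliminaries to supply a genuine zero-one weak learner to the boosting analysis. Second, the sample-complexity accounting for the boosted predictor must be arranged so that the number of rounds $O(\log(1/\epsilon))$ and the union bound over the reweightings contribute only logarithmic factors, which are absorbed into $\tilde{O}$. The realizability reduction of the first paragraph and the loss-comparison facts are routine; it is the quantitative boosting guarantee that genuinely drives the improvement to the $1/\epsilon$ rate and should receive the most attention.
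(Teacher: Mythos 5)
Your proposal matches the paper's own argument: the paper derives the corollary exactly by noting that $(k,B)$-realizability implies $(k,B)$-weak realizability (via the footnote construction $\ww_a=\ww$, $\ww_b$ a scaled indicator of $\kappa$), that weak realizability is equivalent to realizability by $\HH_{2,0}$, and then upgrading Theorem~\ref{thm:sample_complexity_20} to the $1/\epsilon$ realizable rate by fixing a constant accuracy to obtain a weak learner and applying boosting \citep{freund1997decision}. Your write-up is simply a more explicit rendering of the same reduction-plus-boosting route, including the same points of care (ramp versus zero-one loss, preservation of realizability under reweighting) that the paper leaves implicit.
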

\section{Efficient Algorithms}
\label{sec:algos}

\vspace{-0.2cm}

The sample complexity obtained in the previous section may be achieved by using an ERM algorithm. Unfortunately, in the sequel we argue that it is unlikely that there is an efficient implementation of such an algorithm. Thus, we begin by proposing an efficient learning procedure, and provide corresponding sample complexity results. Further, we show our proposed algorithm dominates the natural regularization based approach to the problem.

\subsection{An Efficient Greedy Algorithm}
We start with the description of a greedy based algorithm and analysis of its sample complexity. Let $S$ be a training sample. A {\em rule} is a coordinate $j\in [d]$ such that $y_i=1$ whenever $\xx_i(j)>0$. We say that a rule $j$ {\em covers} an example $(\xx,y)$ if $\xx(j)> 0 $.
Consider the {{\textit{GreedyRule}}} algorithm in Figure \ref{fig:greedy}. Defining a similar algorithm with $\ell_1$ regularization over $\ww$ is straightforward.
\begin{algorithm}[ht]
  \begin{algorithmic}
    \STATE Initialize $\kappa = \emptyset$ and let $S_{\text{non-covered}} = S$.
    \WHILE {there is a rule $j\in [d]$ that covers more than $\frac{m}{100k(B+1)}$ examples from $S_{\text{non-covered}}$}
    \STATE \ 
    \STATE - Add $j$ to $\kappa$
    \STATE - Discard samples covered by $j$ from $S_{\text{non-covered}}$
    \ENDWHILE
    \STATE Find $\ww$ that minimizes the hinge loss on $S_{\text{non-covered}}$ such that $\|\ww\|_2^2\le B^2$
  \captionof{figure}{The {{\textit{GreedyRule}}} Algorithm}\label{fig:greedy}
  \end{algorithmic}
  \label{alg:greedy}
\end{algorithm}

Now define {{\textit{BoostRule}}} to be a boosting algorithm that uses {{\textit{GreedyRule}}} as a weak learner.

\begin{theorem}
BoostRule can learn $(k,B)$-realizable distributions with a sample complexity of $\tilde O\left(\frac{kB+B^2}{\epsilon}\right)$.
\end{theorem}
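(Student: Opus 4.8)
The plan is to show that \textit{GreedyRule} is an efficient \emph{weak} learner for $(k,B)$-realizable distributions, and then invoke boosting exactly as in the boosting remark of \secref{sec:sample_comp}. The crux is an empirical guarantee: run on any (possibly reweighted) sample $S$ drawn from a $(k,B)$-realizable distribution, \textit{GreedyRule} returns a hypothesis whose weighted empirical misclassification error is at most, say, $\tfrac{1}{100}$. This is where the threshold $\tfrac{m}{100k(B+1)}$ is tuned, and I expect it to be the main obstacle.

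For the empirical guarantee I would fix the true rules $\kappa^\star$ (with $|\kappa^\star|\le k$) and the true predictor $\ww^\star$ (with $\|\ww^\star\|_2\le B$) witnessing realizability, and track two quantities once the greedy loop halts. First, each selected rule removes more than $\tfrac{m}{100k(B+1)}$ previously non-covered examples and these sets are disjoint, so at most $100k(B+1)=O(kB)$ rules are ever added; this $O(kB)$ bound is exactly what produces the $kB$ term. Second, because every true rule $j\in\kappa^\star$ is itself a valid (empirically consistent) rule, at termination it covers at most $\tfrac{m}{100k(B+1)}$ non-covered examples, so by a union bound at most $\tfrac{m}{100(B+1)}$ non-covered examples are hit by any true rule. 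Every other non-covered example has all $\kappa^\star$-coordinates non-positive and hence satisfies $y\inner{\ww^\star,\xx}\ge 1$ by \eqref{eq:linear_predict}, i.e.\ zero hinge loss under $\ww^\star$. Using $\|\xx\|_2\le 1$ and $\|\ww^\star\|_2\le B$, each of the few ``bad'' non-covered examples contributes hinge loss at most $1+B$, so $\ww^\star$ incurs total hinge loss at most $\tfrac{m}{100(B+1)}(1+B)=\tfrac{m}{100}$ on $S_{\text{non-covered}}$. Since \textit{GreedyRule} minimizes hinge loss over $\{\|\ww\|_2\le B\}$ and $\ww^\star$ is feasible, the fitted $\ww$ does at least as well and thus misclassifies at most $\tfrac{m}{100}$ non-covered examples (each mistake costs hinge $\ge 1$), while covered examples are classified correctly by construction. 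This gives empirical error $\le\tfrac{1}{100}$, and the same counting holds verbatim once ``number of examples'' is read as ``weighted mass'' for an arbitrary reweighting of $S$, which is precisely the weak-learning property AdaBoost needs.

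Next I would control generalization so that this empirical guarantee transfers to the distribution. The returned hypothesis is exactly a member of $\HH_{2,0}$ with $\ww_b$ supported on the $O(kB)$ selected rule coordinates and $\|\ww_a\|_2\le B$; that is, it is a rules-first classifier with $k$ replaced by $k'=O(kB)$. Theorem~\ref{thm:sample_complexity_20} then shows this class has ramp-loss sample complexity $\tilde O(k'\log d+B^2)=\tilde O(kB+B^2)$ at constant $\epsilon$. Combining this with the empirical bound shows \textit{GreedyRule} is an efficient weak learner whose sample complexity for achieving a constant error is $\tilde O(kB+B^2)$.

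Finally I would boost exactly as in \secref{sec:sample_comp}: running \textit{BoostRule} for $O(\log m)$ rounds drives the empirical error to zero, and the resulting weighted majority of $O(\log m)$ base hypotheses lies in a class whose ramp-loss complexity is still $\tilde O(kB+B^2)$, since the polylogarithmic number of rounds is absorbed into the $\tilde O$. Realizable generalization then yields that $m=\tilde O\!\left(\tfrac{kB+B^2}{\epsilon}\right)$ samples suffice for error $\epsilon$, which is the claimed bound. The delicate point remains the weak-learning step: one must check that the single threshold $\tfrac{m}{100k(B+1)}$ simultaneously caps the number of selected rules at $O(kB)$ and drives the true predictor's residual hinge loss down to an $O(1)$ fraction, the $(B+1)$ factor in the denominator being exactly what cancels the per-example hinge values of size $1+B$.
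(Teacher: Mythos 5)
Your proposal is correct and follows essentially the same argument as the paper: bound the number of selected rules by $100k(B+1)=O(kB)$, show the true predictor $\ww^*$ has total hinge loss at most $\frac{m}{100(B+1)}\cdot(B+1)=\frac{m}{100}$ on the non-covered examples (since only examples covered by $\kappa^*$ can have nonzero hinge loss, and each true rule covers at most $\frac{m}{100k(B+1)}$ of them at termination), conclude via hinge-loss minimality of the fitted $\ww$, generalize through Theorem~\ref{thm:sample_complexity_20} with $k$ inflated to $O(kB)$, and finish by boosting. Your explicit remark that the counting argument carries over to reweighted samples (as boosting requires) is a detail the paper leaves implicit, but it does not change the substance of the proof.
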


We will prove this by showing, in the following lemma, that {{\textit{GreedyRule}}} (Figure \ref{fig:greedy}) is a weak learner. Namely, it is guaranteed to return a hypothesis with error $\frac{1}{2} - \Omega(1)$ whenever it runs on $(k,B)$-realizable distributions. The theorem will then be implied by boosting \citep{freund1997decision}. Indeed, applying boosting on top of a weak learner with sample complexity of $M$ results in a strong learner with sample complexity of $\tilde{O}\left(\frac{M}{\epsilon}\right)$.

\begin{lemma}
If $\DD$ is $(k,B)$-realizable and $m = \tilde\Omega\left(kB+B^2\right)$, then w.h.p.\ the greedy algorithm will return a hypothesis with error $\le 1/4$.
\end{lemma}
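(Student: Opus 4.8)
The plan is to show that the hypothesis $h$ returned by \textit{GreedyRule} has small empirical error on $S$, and then to control the generalization gap by observing that $h$ lies in a hypothesis class of bounded complexity. Let $\kappa^\star$ and $\ww^\star$ (with $\|\ww^\star\|_2^2\le B^2$) be the true rule set and linear classifier witnessing $(k,B)$-realizability, and let $\hat\kappa$ be the rule set selected by the algorithm. The output predicts $1$ whenever $\xx(j)>0$ for some $j\in\hat\kappa$, and $\mathrm{sign}(\inner{\ww,\xx})$ otherwise. I would establish that the empirical error of $h$ is at most a small constant, say $\tfrac{1}{100}$, and that with $m=\tilde\Omega(kB+B^2)$ the gap between empirical and true error is also a small constant, so that the true error is at most $\tfrac14$.

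First I would pin down two combinatorial facts about the greedy phase. Each iteration removes more than $\tfrac{m}{100k(B+1)}$ examples from $S_{\text{non-covered}}$, so the algorithm selects fewer than $100k(B+1)$ rules. Next, by \eqref{eq:rule_predict} every true rule $j\in\kappa^\star$ satisfies $y_i=1$ whenever $\xx_i(j)>0$, hence each $j\in\kappa^\star$ is a valid empirical rule; since the loop terminated, each such $j$ covers at most $\tfrac{m}{100k(B+1)}$ of the final $S_{\text{non-covered}}$. Summing over the at most $k$ true rules, at most $\tfrac{m}{100(B+1)}\le \tfrac{m}{100}$ of the non-covered examples are covered by a true rule (these have label $1$), and every other example of $S_{\text{non-covered}}$ lies in the genuine linear region and satisfies $y\inner{\ww^\star,\xx}\ge 1$. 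On the examples predicted $1$ by a selected rule there are no empirical mistakes, since by definition a rule never covers a negative sample point.

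I would then bound the error of the fitted $\ww$ on $S_{\text{non-covered}}$ through $\ww^\star$, which is feasible. On the genuine linear examples $\ell_{\text{hinge}}(\ww^\star)=0$, while on each of the at most $\tfrac{m}{100(B+1)}$ true-rule-covered examples the hinge loss is at most $1+|\inner{\ww^\star,\xx}|\le 1+B$, using $\|\xx\|_2\le 1$ on the domain $\ball_1^{d,2}$. Thus the total hinge loss of $\ww^\star$ on $S_{\text{non-covered}}$ is at most $\tfrac{m}{100(B+1)}(1+B)=\tfrac{m}{100}$, so the minimizer $\ww$ has total hinge loss at most $\tfrac{m}{100}$. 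Since any misclassified point contributes hinge loss at least $1$, the fitted $\ww$ misclassifies at most $\tfrac{m}{100}$ of $S_{\text{non-covered}}$. Combining with the rule phase, the empirical (ramp or misclassification) error of $h$ is at most $\tfrac{1}{100}$.

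Finally I would argue generalization: every possible output of \textit{GreedyRule} is a disjunction over at most $100k(B+1)$ coordinates combined with a linear classifier of norm at most $B$, i.e.\ a hypothesis of the same form as $\HH_{2,0}$ with $k$ replaced by $k'=O\!\left(k(B+1)\right)$. By the reasoning behind Theorem~\ref{thm:sample_complexity_20}, this class has sample complexity $\tilde O\!\left(\tfrac{k'\log d+B^2}{\epsilon^2}\right)$, so for a constant target accuracy the uniform-convergence gap is an arbitrarily small constant once $m=\tilde\Omega(k(B+1)+B^2)$; since $k(B+1)=kB+k\le 2(kB+B^2)$ (for $B\ge 1$), this is exactly $m=\tilde\Omega(kB+B^2)$. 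Hence w.h.p.\ $\ell_{\text{ramp}}(h,\DD)\le \tfrac{1}{100}+\tfrac{1}{8}\le \tfrac14$. The main obstacle is the generalization step: the selected rule set $\hat\kappa$ is data-dependent, so I must verify that all outputs lie in a single fixed class of controlled complexity (the inflated-$k$ version of $\HH_{2,0}$) before invoking uniform convergence, and confirm that $k(B+1)+B^2$ collapses into the claimed $kB+B^2$ bound.
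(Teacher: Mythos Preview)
Your proposal is correct and follows essentially the same route as the paper's proof: bound $|\hat\kappa|\le 100k(B+1)$ so the output lies in an inflated $\HH_{2,0}$, invoke Theorem~\ref{thm:sample_complexity_20} for the generalization step, and bound the empirical hinge loss of $\ww^\star$ on $S_{\text{non-covered}}$ by observing that (i) each true rule in $\kappa^\star$ is a valid empirical rule, hence covers at most $\tfrac{m}{100k(B+1)}$ uncovered examples at termination, and (ii) on each such example the hinge loss is at most $B+1$. Your constants are slightly tighter (empirical error $\le \tfrac{1}{100}$ versus the paper's $\tfrac{1}{5}$), and you are more explicit about the data-dependence of $\hat\kappa$ and the reduction of $k(B{+}1)+B^2$ to $kB+B^2$, but the argument is the same.
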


\begin{proof} (sketch)
We first note that upon termination of the algorithm, $\kappa$ contains at most $100k(B+1)$ rules. Hence, the hypothesis returned by the algorithm belongs to $\HH_{2,0}$ with $100k(B+1)$ instead of $k$. By Theorem \ref{thm:sample_complexity_20} and the assumption that $m = \tilde\Omega\left(kB+B^2\right)$, it is enough to show that the empirical error is $\le 1/5$. Indeed, for this amount of examples, Theorem \ref{thm:sample_complexity_20} guarantees a generalization error smaller than $\frac{1}{4}-\frac{1}{5}$.

Since there are no mistakes on the covered examples, it is enough to show that at most $0.2m$ of the non-covered examples are mis-classified by the vector $\ww$ that was found in step 3. We will show an even stronger property. Namely, that
\[
\sum_{(\xx_i,y_i)\in S_{\text{non-covered}}}l_{\text{hinge}}(\inner{\ww,\xx_i},y_i) \le 0.2m.
\]
Let $\kappa^*\in \binom{[n]}{\le k}$ and $\ww^*$ be respectively a set and a vector given which $S$ is $(k,B)$ realizable. It is enough to show that
\[
\sum_{(\xx_i,y_i)\in S_{\text{non-covered}}}l_{\text{hinge}}(\inner{\ww^*,\xx_i},y_i) \le 0.2m.
\]
To see that the last equation holds, let $S^*_{\text{covered}}$ be the examples in $S$ that are covered by the rules in $\kappa^*$. 
Denoting $U = S_{\text{non-covered}}\cap S^*_{\text{covered}}$,
we have that
\small{
\begin{eqnarray*}
\sum_{\substack{(\xx_i,y_i)\in \\ S_{\text{non-covered}}}}\hspace{-0.15in} l_{\text{hinge}}(\inner{\ww^*,\xx_i},y_i)
&
=
&
\sum_{(\xx_i,y_i)\in U} \hspace{-0.1in} l_{\text{hinge}}(\inner{\ww^*,\xx_i},y_i) \\ 
\le
|U|(\|\ww^*\|+1) 
& \le & |U|(B+1),
\end{eqnarray*}}
The first equality follows from the fact that, since $S$ is $(k,B)$ realizable for $\kappa^*$ and $\ww^*$, then there are no mistakes in $S^*_{\text{non-covered}}$. In other words, the only mistakes in $S_{\text{non-covered}}$ are in $U$. The result follows by noting that $|U|\le \frac{m}{100 (B+1)}$, since each rule in $\kappa^*$ covers at most $\frac{m}{100 (B+1) k}$ examples from $S_{\text{non-covered}}$, or step 2 would not terminate.
\end{proof}

\subsection{Theoretical Limitation of Regularization-based Approaches}
An alternative approach to efficiently learning sparse classifiers is to replace the sparsity (i.e., $\ell_0$) constraint with an $\ell_1$ constraint, and show that the distribution at hand can be realized by low-norm linear classifiers \citep{ng2004feature}. This suggests that we can try and learn $(k,B)$ distributions by optimizing over $\HH_{2,0}$ with the $\ell_0$ norm replaced by $\ell_1$. Refer to this class as $\HH_{2,1}$.  The following lemma proves that this strategy is inferior to the greedy algorithm. Specifically, it results in lower bounded sample complexity $\Omega(\frac{kB^2}{\epsilon})$, which is larger than the upper bound on the greedy sample complexity.

To show that an algorithm has sample complexity of at most $\frac{C}{\epsilon}$, it suffices to show that there exists a distribution which can be realized by a linear classifier of squared norm at most $C$. Namely, there is a $C$-bounded norm linear function that is greater than $1$ on positive points and smaller than $-1$ on negative points.

\begin{lemma} \label{lemma:mixed}
Let $B\ge 1$. There exists a $(k,B)$-realizable distribution $\mathcal{D}$ such that
\vspace{-0.3cm}
\begin{enumerate}
    \item The marginal distribution of $\mathcal{D}$ on $\reals^d$ is supported in $\ball^{d,2}_1$ (and hence also in $\ball^{d,\infty}_1$).
    \vspace{-0.2cm}
    \item Any linear classifier that realizes $\mathcal{D}$ with margin has squared $\ell_1$ norm $\Omega (k^2B^2)$ and squared $\ell_2$ norm $\Omega(kB^2)$.
    \vspace{-0.3cm}
\end{enumerate}
\end{lemma}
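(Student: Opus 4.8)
The plan is to exhibit a single explicit distribution, supported on $k+1$ points inside the unit $\ell_2$ ball, that is $(k,B)$-realizable and yet forces \emph{every} margin-realizing linear classifier to place weight of order $B$ on each of $k$ distinct coordinates. I would use $k$ designated rule coordinates $r_1,\ldots,r_k$ together with one auxiliary coordinate $c\notin\{r_1,\ldots,r_k\}$. For each $j\in[k]$ place a positive example $\pp_j=\frac1B\ee_{r_j}+\frac1B\ee_c$, add a single negative example $\nn=\frac1B\ee_c$, and let $\DD$ be uniform over these $k+1$ points. (I take the scale $1/B$ for concreteness; this keeps all points in $\ball_1^{d,2}$ whenever $B\ge\sqrt2$, and for the bounded range $1\le B<\sqrt2$ the same construction with $O(1)$-scaled coordinates gives the conclusions, the $\Omega$-constants absorbing this range.)

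First I would verify the two easy facts. Property~1 is immediate: $\|\pp_j\|_2^2=2/B^2\le1$ and $\|\nn\|_2^2=1/B^2\le1$, so the marginal is supported in $\ball_1^{d,2}\subseteq\ball_1^{d,\infty}$. For $(k,B)$-realizability take the rule set $\kappa=\{r_1,\ldots,r_k\}$ and the linear vector $\ww=-B\,\ee_c$, which satisfies $\|\ww\|_2=B$. On each $\pp_j$ the rule coordinate $r_j$ is strictly positive, so \eqref{eq:rule_predict} holds and the label $+1$ is produced by a rule; on $\nn$ every rule coordinate equals $0$, so $\nn$ lies in the linear regime, where $\inner{\ww,\nn}=-1$ gives $y\inner{\ww,\nn}=1\ge1$, matching \eqref{eq:linear_predict}.

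The crux is Property~2, and the key idea is that $\pp_j$ and $\nn$ differ in exactly one coordinate, namely $r_j$, by the small amount $1/B$. Let $\mathbf{v}$ be any linear classifier realizing $\DD$ with margin, so $\inner{\mathbf{v},\pp_j}\ge1$ and $\inner{\mathbf{v},\nn}\le-1$ for all $j$. Subtracting these inequalities and using $\pp_j-\nn=\frac1B\ee_{r_j}$ gives $\frac1B\,\mathbf{v}(r_j)\ge2$, hence $\mathbf{v}(r_j)\ge2B$ for every $j\in[k]$, regardless of how $\mathbf{v}$ distributes its remaining mass. Summing over the $k$ rule coordinates yields $\|\mathbf{v}\|_2^2\ge\sum_j\mathbf{v}(r_j)^2\ge4kB^2=\Omega(kB^2)$ and $\|\mathbf{v}\|_1\ge\sum_j\mathbf{v}(r_j)\ge2kB$, so $\|\mathbf{v}\|_1^2=\Omega(k^2B^2)$, as required.

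The main obstacle is satisfying three competing constraints at once: the points must lie in the unit ball, the negative example must remain separable from the positives by an $\ell_2$-norm-$B$ linear part (to keep $\DD$ realizable), and the positive/negative gap in each rule coordinate must be as small as $\Theta(1/B)$ so that realizing with margin costs weight $\Theta(B)$ there. The tension is that shrinking the gap pushes up the value needed in the auxiliary coordinate, and hence the norm needed to separate $\nn$; the resolution is to \emph{share} a single auxiliary coordinate $c$ and a single negative point across all $k$ rules, so the linear part pays only $1/\gamma\le B$ rather than $\sqrt{k}/\gamma$. I would finally double-check that no unintended rule fires on $\nn$ and that the subtraction argument applies to an \emph{arbitrary} realizing $\mathbf{v}$ (it does, since it uses only the two margin inequalities on the support), which closes the proof.
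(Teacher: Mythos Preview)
Your construction is correct and the margin-subtraction argument is clean; the edge case $1\le B<\sqrt2$ is indeed harmless since in that range $kB^2=\Theta(k)$ and even the trivial all-rules distribution on $\ee_{r_1},\ldots,\ee_{r_k}$ already forces $\|\mathbf v\|_2^2\ge k$.

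Your route, however, is genuinely different from the paper's. The paper works in dimension $d=k+B^2$ and uses $B^2$ unit-norm negative examples $\ee_{k+1},\ldots,\ee_{k+B^2}$ together with positives $(\ee_j+\aaa)/\sqrt2$ that all share the dense auxiliary vector $\aaa=\frac1B\sum_{i=1}^{B^2}\ee_{k+i}$. Each negative pins one coordinate to $\le -1$, so $\inner{\ww,\aaa}\le -B$, and then the positive margin constraint on $(\ee_j+\aaa)/\sqrt2$ forces $\ww_j\gtrsim B$ on every rule coordinate. You instead keep a \emph{single} auxiliary coordinate and a \emph{single} negative, and obtain the same $\Theta(B)$ weight on each rule coordinate by shrinking the gap $\pp_j-\nn$ to $\frac1B\ee_{r_j}$ and subtracting the two margin inequalities directly. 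Your version is more economical (only $k{+}1$ points in $k{+}1$ coordinates versus $k{+}B^2$ of each) and avoids the implicit integrality of $B^2$; the paper's version keeps all support points exactly on the unit sphere and so sidesteps the small-$B$ case you had to remark on. Both yield the same $\Omega(kB^2)$ and $\Omega(k^2B^2)$ conclusions.
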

\begin{proof}
Let $d = k + B^2$ and let $\aaa = \frac{1}{B}\sum_{i=1}^{B^2}\ee_{k+i}$.
Consider the uniform distribution on
%
\begin{equation*}
 \frac{\ee_1 + \aaa}{\sqrt{2}},1),
 \ldots,\frac{\ee_k + \aaa}{\sqrt{2}},1), (\ee_{k+1},-1),\ldots,(\ee_{k+B^2},-1),
 \end{equation*}
Clearly, the distribution is $(k,B)$-realizable. Likewise, if $\ww$ realizes $\mathcal{D}$, we must have $\ww_i\le -1$ for any $i=k+1,\ldots,k+B^2$. It follows that $\inner{\ww,\aaa}\le -B$. Hence, we must have $\ww_i\ge B$ for any $i=1,\ldots,k$.
\end{proof}

To conclude the lower bound argument, we note that for learning in $\HH_{2,1}$ we need to restrict the $\ell_2$ norm to at least $kB^2$ to achieve the minimal sample complexity. The latter is thus lower bounded by $\Omega(\frac{kB^2}{\epsilon})$ as the upper bound on the sample complexity with respect to the $\ell_2$ norm is tight \citep{anthony2009neural}. The sample complexity results of the above algorithmic variants are summarized in  Table~\ref{table:sample_comp}. Note that these still hold for the non-boosting version of the listed algorithms when replacing $\epsilon$ with $\epsilon^2$ in the sample complexity expressions \citep{freund1997decision}. This holds for the hardness result formulated in the next section as well.

\begin{table}[t]
\caption{Comparison of sample complexities. In the table, Comp. stands for complexity.}
\label{table:sample_comp}
\begin{center}
\begin{small}
\begin{sc}
\begin{tabular}{cccc}
\toprule
Method & Efficient & Sample Comp. \\
\midrule
ERM    & No & $\tilde{O} \left( \frac{k  + B^2}{\epsilon}\right)$\\
Convex Relaxation    & Yes & $\tilde \Omega \left( \frac{kB^2}{\epsilon}\right)$ \\
Greedy    & Yes & $\tilde O \left( \frac{kB + B^2}{\epsilon}\right)$ \\
\bottomrule
\end{tabular}
\end{sc}
\end{small}
\end{center}
\vspace{-0.7cm}
\end{table}

\subsection{Hardness}
Having shown that our greedy approach is better in terms of sample complexity than a natural regularization based approach, we now show that in some sense we cannot do better than this greedy approach. In particular, we provide evidence that its sample complexity, namely $O\left(\frac{B^2 + Bk\log(d)}{\epsilon}\right)$, is close to optimal among all efficient ($\poly(B,d)$ runtime) algorithms. Concretely, we will show that an efficient algorithm with sample complexity of $O\left(\frac{B^2 + B^{1-\alpha}k\log(d)}{\epsilon}\right)$ for any $\alpha>0$ would lead to a breakthrough in the extensively studied problem \citep[e.g., see][]{shalev2010learning, birnbaum2012learning, daniely2014complexity} of learning large margin classifiers with noise.
To do so, we require a few additional definitions. We say that a distribution $\DD$ on $\ball^{d}_1\times \{\pm 1\}$ is $(\eta,B)$-realizable if there exists $\ww\in \ball^d_B$ such that 
\[
\Pr_{(\xx,y)\sim\DD}\left(y\inner{\ww,\xx}\le 1\right) \le \eta(B).
\]
The notion of $(\eta,B)$-realizable sample is defined similarly. We next describe the problem of learning large-margin classifiers with noise rate of $\eta:\mathbb N\to [0,\frac{1}{4})$.
We are given a norm bound $B\in \{1,2,\ldots\}$ and access to an $(\eta (B),B)$-realizable distribution $\DD$ on $\ball^{B^2}_1\times \{\pm 1\}$.
The goal is to find a classifier with 0-1 error $\le \frac{1}{4}$ in time $\poly(B)$.

This problem and variants have been studied extensively. Clearly, the problem becomes easier as $\eta$ gets smaller. The best known algorithms \citep{birnbaum2012learning} can tolerate noise of rate $\frac{\poly\log(B)}{B}$. Furthermore, there are lower bounds \citep{daniely2014complexity} that show that for a large family of algorithms (specifically, generalized linear methods),  better bounds cannot be achieved. Likewise, there are hardness results \citep{daniely2016complexity} that show that, under certain complexity assumptions, no algorithm can tolerate a noise rate of $2^{-\log^{1-\alpha}(B)}$.

We will next show that algorithms for learning $(k,B)$-realizable distributions with sample complexity of $O
\left(B^2 + B^{1-2\alpha}k\log(d)\right)$ would lead to an 
algorithm for learning large margin classifiers with noise 
rate $\le \frac{1}{B^{1-\alpha}}$, improving on the current 
state of the art. By boosting, this is true even if the 
algorithm is only required to return a hypothesis with non 
trivial performance (say, error at most $0.499$) for 
$(k,B)$-realizable distributions. This serves as an indication that the sample complexity of $O
\left(B^2 + Bk\log(d)\right)$, achieved by our greedy algorithm, is close to optimal among efficient algorithms. A similar argument would rule out, under the complexity assumption from \citet{daniely2016complexity}, efficient algorithms that enjoy a sample complexity of $O
\left(B^2 + 2^{\log^{1-\alpha}(B)}k\log(d)\right)$.

We next sketch the argument. Suppose that $\AAA$ is a learner for the problem of learning  $(k,B)$-realizable distributions, with sample complexity of $O \left(B^2+B^{1-2\alpha}k\log(d)\right)$. 
Suppose now that $\DD$ is $(\eta,B)$-realizable with $\eta \le \frac{1}{B^{1-\alpha}}$ and $S$ is a sample consisting of $m = B^{10}$ points.
We will generate a new sample $\Psi(S)$ by replacing $(\xx_i,y_i)$ with $((\xx_i,\ee_i),y_i)$, where $\ee_i\in \reals^m$ is the $i$th vector in the standard basis. It is not hard to verify that, with constant probability, $\Psi(S)$ is $(2\eta m ,B)$-realizable. Indeed, the original vector that testifies that $\DD$ is $(\eta, B)$-realizable will correctly classify about $m-\eta m$ examples with margin $1$. The remaining examples can be handled using $k \approx \eta m$ rules. 
Now, since $m= \tilde{\Omega}\left( B^2+B^{1-2\alpha}\eta m \log(m)\right)$, $\AAA$ will have non-trivial performance. This translates into a non-trivial performance on the original distribution for the large margin with noise problem.

We have thus shown in the last few sections that learning with rules, while inherently hard, does lead to sample complexity improvements, and can be learned in practice using a greedy algorithm that trades-off computational and statistical efficiency. As we shall see below, learning with rules is also beneficial in practice.
\vspace{-0.6cm}
\section{Related Work \label{sec:related}}
\vspace{-0.2cm}
A long history of works in machine learning is devoted to learning rules. Association rule learning \citep{zhang2002association, agrawal1993mining} is a rule-based method for discovering relations between variables, or rules, in large databases. Rules lists \citep{rivest1987learning,clearwater1990rl4, letham2015interpretable} which consist of a series of \emph{if..., then...} statements, are a type of associative classifier, as the lists are formed from association rules. The \emph{if} statements define a partition of a set of features, or rules, and the \emph{then} statements correspond to the predicted outcome. Rules lists, or decision lists, generalize decision trees \citep{Quinlan:C45}, in the sense that any decision tree can be expressed as a decision list, and any decision list is a one-sided decision tree \citep{letham2015interpretable}.

All the above works assume that the data may be explained and \emph{perfectly} classified via a set of relatively simple rules. In contrast, we propose a hybrid and more realistic framework, where labels are determined either by a set of simple rules or by a bounded-norm classifier in examples where the rules are not applicable. To the best of our knowledge, our work is the first to investigate the computational and sample complexity of this natural setting.
 In principle, one can augment a decision tree with linear classifier nodes (i.e., oblique decision trees) to handle such cases \citep{Murthy+al:94}. However, this would result in a different linear classifier for each rule. Furthermore, learning such trees cannot be done optimally, and does not result in performance guarantees like we have here.

The works of \citep{calderon2018conditional,juba2017conditional} are concerned with a somewhat different goal as they seek to find subsets in the data, determined by k-Disjunctive Normal Form “rules” over some features, for which a good linear predictor can be found. In contrast, we aim at learning a linear classifier for all of the data unexplained by the rules. Our approach is also different from \citep{viola2001rapid} that combines increasingly more complex classifiers in a cascade in contrast to our joint learning approach. The latter work does not provide theoretical results on the proposed method. Another relevant body of work considers learning with constraints. \citet{abu1993method} deals with incorporating hints, or prior knowledge, such as invariance or oddness, in the learning process under the form of artificially generated examples. 

An alternative approach to rule learning is to consider a sparse linear classifier. Since sparsity constraints are hard to enforce, a typical approach is to use $\ell_1$ regularization as a surrogate for the $\ell_0$ sparsity constraint. Under some conditions it can be shown \citep{ng2004feature} that this may result in tractable learning of rule based classifier. Similar results are available for online learning with the Winnow algorithm and its variants \citep{littlestone1988learning}. However, these guarantees will no longer hold for the case of mixed rules and bounded norm classifier as we consider here. An additional related line of work is on mixed norm regularization (e.g., see \citep[e.g., see][]{zadorozhnyi2016huber,zou2005regularization}), which uses both norms $\ell_1$ and $\ell_2$. However, as we saw, such mixed regularization results in sample complexity bounds that are inferior to those obtained by our greedy algorithm.

\section{Experimental Evaluation}
\label{sec:sims}

We now empirically demonstrate the merit of our approach. We compare the performance of our $\ell_1$ and $\ell_2$ {\textit{GreedyRule}} to traditional $\ell_1$ and $\ell_2$ penalties. We first consider binary classification on a synthetic dataset, generated with perfect rule features, and then turn to a real-life Twitter sentiment analysis based on the SemEval '17 task \citep{SemEval:2017:task4}.

Our greedy rule-based approach, described in Section \ref{sec:algos}, iteratively  selects the feature that minimizes the current evaluation loss when added to the rules set. At each step, a regularized linear classifier is trained after removing the rule features. Prediction is then carried out first using these rules, and then by the learned classifier for examples where none of the rules apply.

For the non-rule part of our classifier, as well as the baseline classifiers, we consider a standard constrained logistic regression objective:
\begin{equation}
\min_{\ww, c} \frac{1}{m} \sum_{i=1}^m \log(\exp(-y_i(\ww^T\xx_i+c))+1) + \frac{1}{C} R(\ww),
\end{equation}
where $1/C$ is the regularization strength parameter that trades-off training accuracy and regularization and the penalty $R(\ww)$ can be either
$R(\ww) = \frac{1}{2} \lVert \ww \rVert_2^2$ or $R(\ww) = \lVert \ww \rVert_1$. We use the logistic regression implementation of the scikit-learn library \citep{scikit-learn} for both our greedy approach and the baseline linear classifier.

\subsection{Synthetic Dataset}
We generate $m$ training samples with $400$ standard features and $k=20$ rule features. The rule features are i.i.d. Bernoulli random variables with parameter $p=1/60$. The remaining features are i.i.d. random variables generated from a Gaussian distribution with $\mu=-0.02$ and $\sigma^2=1$. For each sample, $y = 1$ if one of the rule features is non zero and $y = \mbox{sign}(\inner{\ww, \xx})$ with $\ww_j =1, \text{ for } 1 \leq j \leq d$, otherwise. For the greedy algorithms, we use $2m/3$ samples for training and $m/3$ samples for evaluation to select the $k$ rule features. We then retrain the chosen classifier on the $m$ training samples. The test set is composed of $2000$ samples, generated similarly to the training samples. The results are averaged over $20$ realizations.

Figure~\ref{fig:synthetic_dataset}(left) shows the test accuracy of the different algorithms as a function of the number of training samples $m$. It can be seen that our greedy $\ell_2$ and $\ell_1$ algorithms outperform the traditional $\ell_2$ and $\ell_1$ regularized classifiers, as they succeed in finding the rule features. Appealingly, the gap between our approach and classic regularization is greater when $m$ is smaller. In Figure~\ref{fig:synthetic_dataset}(right), we show the accuracy of our greedy approaches 
as a function of $|\kappa|$, the number of rules allowed in {\textit{GreedyRule}} (Figure \ref{fig:greedy}). It can be clearly seen that increasing $|\kappa|$ towards the true $k$ improves the performance while values beyond $k$ decrease accuracy.

\begin{figure}[t!]
\centering
\begin{tabular}{cc}
\hspace{-0.15in}
\includegraphics[width=0.5\columnwidth]{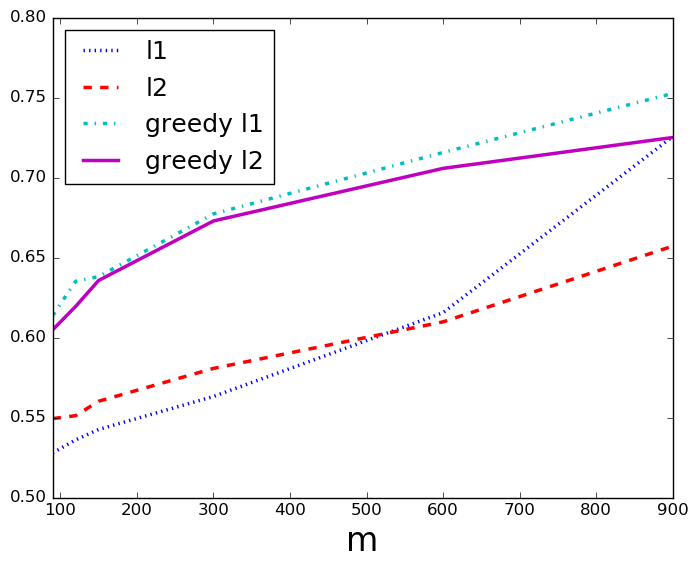} \hspace{-0.15in} &
\includegraphics[width=0.5\columnwidth]{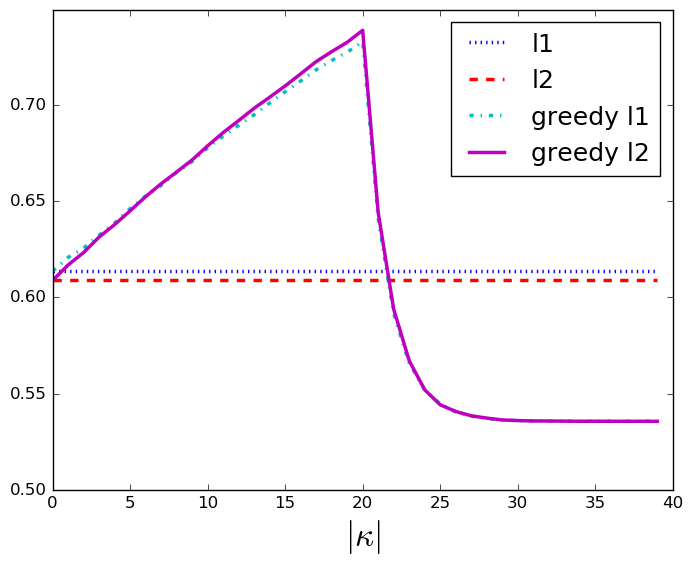}
\end{tabular}
\vspace{-0.6cm}
\caption{{\small{Accuracy comparison of the {\textit{GreedyRule}} based approach and the corresponding baseline on the synthetic dataset as a function of the number of training sample $m$ (left) and the number of rules $|\kappa|$ in the generating distribution (right).}}}\label{fig:synthetic_dataset}
\end{figure}

\begin{figure}
\centerline{
\includegraphics[width=0.75\columnwidth]{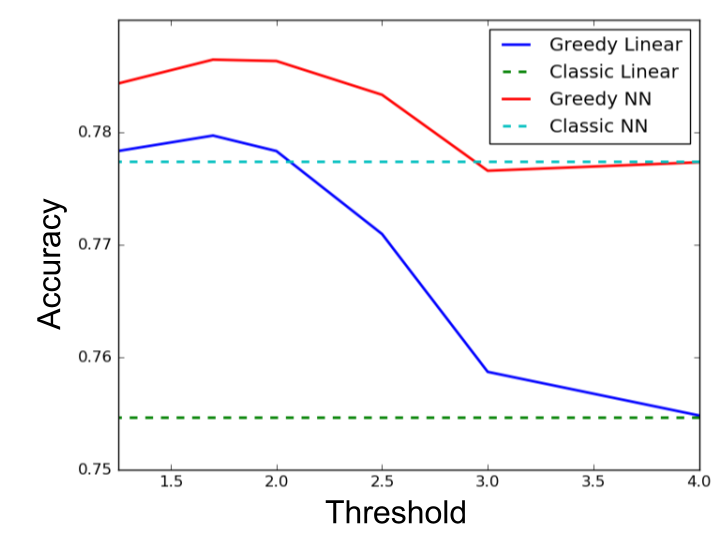}}
\label{fig:sentiment}
\vspace{-0.5cm}
\caption{{\small{Comparison of our method and the baseline classifier for two settings: using a base linear classifier or a neural network (NN). Shown is accuracy vs. the threshold used to pre-select the rule candidates.}}}
\vspace{-0.4cm}
\end{figure}

\subsection{Sentiment Analysis - Twitter}

\begin{figure*}
\begin{center}
\begin{tabular}{cc}
\begin{minipage}{0.6\columnwidth}
\label{fig:tweets}
\caption{\small Examples of tweet prediction. The last column provides the stem which was considered as a rule by the greedy algorithm. 'Pos.' and 'Neg.' stand for positive and negative, respectively.}
\end{minipage} &
\begin{tiny}
\begin{sc}
\begin{tabular}{cccc}
\toprule
Tweet & Greedy $\ell_2$ & $\ell_2$ & Stem \\
\midrule
Boko Haram on Saturday claimed responsibility \\ for attacks in Chad's capital & Neg. & Pos. & attack \\
\midrule
Dustin Johnson dealt with another major disappointment & Neg. & Pos. & disappoint \\
\midrule
Randy Orton is one of my favorites, despite everything. \\ Why? Because he's so damn good in the ring. & Pos. & Neg. & good \\
\midrule
We may believe whatever we want about gay marriage. \\ But God calls us to love, not to condemn. & Pos. & Neg. & love \\
\bottomrule
\end{tabular}
\end{sc}
\end{tiny}
\end{tabular}
\end{center}
\end{figure*}

We now turn to the SemEval-2017 Task 4 sub-task A \citep{SemEval:2017:task4} of message polarity prediction. That is, given a Twitter message, the goal is to classify whether it has a positive, negative or neutral sentiment. We note that the task cannot be reconstructed precisely since some tweets become unavailable with time. We report results on binary polarity prediction (positive vs. negative), as they better demonstrate the effectiveness of rules. Results for three way classification (not shown) were similar in trend, and resemble state-of-the-art results on this problem \citep{SemEval:2017:task4}.

The reduced dataset is composed of $3$K training tweets, $1$K evaluation tweets, and $8$K are held out as test tweets. As a pre-processing step, we clean the text by removing links and special characters. We then use the SnowBall stemmer to transform each token into a stem. We adopt a bag-of-words representation, where the features of each example are a binary vector of appearance of tokens, or stems, in the tweet. The resulting stem dictionary is constructed with respect to the training and evaluation examples and contains about $d=6.5$K tokens.

Naturally, the dataset does not contain perfect rules, which requires us to pre-select candidates that are near rules. We first discard features that appear in less than $4$ negative ($16$ positive) training tweets or for which $\hat{p}_{1,j} < 0.75$ ($\hat{p}_{0,j} < 0.9$), where $\hat{p}_{1|0,j}$ is the empirical probability that the label has value 1 or 0 given feature $x(j)$. Note that in this case, we consider rules for both labels $y=1$ and $y=0$. The discrepancy between the chosen thresholds reflect the data bias, which contains four times more positive than negative training examples. We then choose the top $k$ rules, ordered by $\sqrt{M} p_{1|0,j}$, where $M$ is the number of samples containing $x(j)$. In words, this balances between the nearness to rules and the coverage of the feature.

Figure \ref{fig:sentiment} shows the overall accuracy of the standard and greedy $\ell_2$ methods as a function of that threshold. Again, to take into account the data bias, we use the threshold as presented in the figure for rules inducing negative tweets, and four times that threshold for rules inducing positive tweets. The figure also presents the accuracy of a neural network classifier as well as a {\textit{GreedyRule}} variation using the same non-linear classifier instead of a regularized logistic regression. For both our greedy approach and the baseline non-linear classifier, we use the neural network implementation of the scikit-learn library \citep{scikit-learn} with 2 hidden layers with 5 and 2 neurons, respectively. We considered adding token pairs as features, e.g., to cope with the issue of negation. However, this did not improve the {\textit{GreedyRule}} classifier's performance.

It may be observed that, as we allow more candidate rules by lowering the threshold, accuracy improves for both linear and non-linear {\textit{GreedyRule}} classifiers. Below a certain threshold (approximately $1.5$) accuracy begins to decrease since non-rule features begin to be mistaken for rules, due to the data sparsity. We note that cross-validation on the evaluation data yields a threshold of $1.7$, which corresponds to the highest accuracy in the test set as well. Although no theoretical results have been provided for the case of non-linear classifiers, the {\textit{GreedyRule}} non-linear variation behaves similarly to its linear counterpart, as might have been expected.

The results are also quite appealing qualitatively. Stem rules chosen by the greedy algorithm have a clear sentiment semantics and include stems such as: happi, danger, evil, fail, excit, annoy, blame, loser, thank, magic, disappoint, failure, ruin, shame, stupid, love, terribl, worst, great, ridicul, disagre. Figure \ref{fig:tweets} shows a few test tweets for which our greedy $\ell_2$ linear model does well but that are misclassified by traditional $\ell_2$.

\section{Summary}
\vspace{-0.2cm}
In this work, we tackled the problem of learning rules-first classifiers. These, in addition to achieving high accuracy, do not make "embarrassing mistakes" where a simple explanation to the true label is possible, i.e., when the label can be accurately predicted from a single feature or rule. We formalized the notion of rules-based hypothesis classes, characterized the sample and computational complexity of learning with such classes and proposed an efficient greedy algorithm that trades-off computational and statistical complexity. Appealingly, its sample complexity is better than that of standard convex relaxation, and is likely optimal among all efficient algorithms. Finally, we demonstrated the benefit of our approach on simulated data as well as on a real-life tweets sentiment analysis task.

Our work is a first step toward an explicit formalization of the desideratum that the learning model does not make mistakes where good predictions can easily be achieved as well as explained. There are many intriguing directions for future developments, such as the obviously needed but non-trivial extension to soft rules and the adoption of a loss function that weights training points to account for embarrassing misclassification. More generally, we would like to learn under more flexible "embarrassment" requirements, such as ensuring the learn model does not make mistakes where simpler models do well.

\newpage
\newpage

\bibliographystyle{plainnat}
\bibliography{main}

\end{document}